\begin{document}

\title{Multi-Relevance Transfer Learning
}


\author{Tianchun Wang         \and
        Xiaoming Jin           \and
        Xiaojun Ye 
}


\institute{Tianchun Wang \at
              School of Information Systems\\
              Singapore Management University \\
              80 Stamford Road, Singapore 178902 \\
              \email{tcwang@smu.edu.sg}           
           \and
           Xiaoming Jin, Xiaojun Ye \at
              School of Software, Tsinghua University\\
              Beijing 100084, China\\
              \email{\{xmjin,xjye\}@tsinghua.edu.cn}
}

\date{Received: date / Accepted: date}

\maketitle

\begin{abstract}
Transfer learning aims to faciliate learning tasks in a label-scarce target domain by leveraging knowledge from a related source domain with plenty of labeled data. Often times we may have multiple domains with little or no labeled data as targets waiting to be solved. Most existing efforts tackle target domains separately by modeling the `source-target' pairs without exploring the relatedness between them, which would cause loss of crucial information, thus failing to achieve optimal capability of knowledge transfer. In this paper, we propose a novel and effective approach called Multi-Relevance Transfer Learning (MRTL) for this purpose, which can simultaneously transfer different knowledge from the source and exploits the shared common latent factors between target domains. Specifically, we formulate the problem as an optimization task based on a collective nonnegative matrix tri-factorization framework. The proposed approach achieves both source-target transfer and target-target leveraging by sharing multiple decomposed latent subspaces. Further, an alternative minimization learning algorithm is developed with convergence guarantee. Empirical study validates the performance and effectiveness of MRTL compared to the state-of-the-art methods.
\end{abstract}

\section{Introduction}
\label{intro}
Transfer learning, which intends to utilize knowledge from source domains to help the learning in a target domain, has been established as one of the most important machine learning paradigms~\cite{pan2010survey}. In practice, a common scenario is that test data are often sampled from different distributions. One example is the EEG-based Brain Computer Interfaces (BCI) applications. If people want to classify the EEG data collected from several sessions (e.g. more than one hour) while only one session of them are labeled, then the unlabeled sessions can be seen as the target domains and the labeled one is source domain.  In this case, the distribution divergences between the source domain and different target domains may vary widely. Another characteristic is that the target domains may share some common latent structure which can help enhance the knowledge transfer.  Hence, a significant requirement for sufficient transfer learning in this scenario is to simultaneously exploit the relatedness between target domains and borrow different knowledge from the source
domain to each target domain.  \\
\indent However, most existing domain adaptation methods are designed for transferring knowledge from one or multiple source domains to a single target domain. We refer to such approach as single-relevance transfer learning. That is, the information path only between source and target. These methods do not consider the underlying relatedness between target domains.  Incurred by the multi-domain property, learning one target domain can help to learn another. It will lead to mutual reinforcement when learning the target domains together. Without exploiting the relatedness between targets, existing methods may only seperately transfer the common knowledge in each `source-target' pairs, which may result in partial transfer and is difficult to achieve optimal capability of knowledge transfer.
To exploit the relatedness between domains, multi-task learning~\cite{evgeniou2007multi,dredze2010multi} is a good choice which tackles these related tasks together by extracting and utilizing appropriate shared information across domains.  However, multi-task learning techniques are suitable for the cases that training and test data in each domain are sampled from the same distribution and each domain has reasonably large amounts of labeled data. Therefore, these methods would fail to transfer different knowledge to each target domain from the source, which are not ideal for such applications.\\
\indent In this paper, we propose a novel approach, Multi-Relevance Transfer Learning (MRTL), which simultaneously transfers different knowledge from the source to each target domain, and exploits the relatedness between targets to achieve knowledge reinforcement. The main idea of MRTL is illustrated in Figure 1. Different from traditional single-relevance methods, MRTL enhances transfer capability by targets exploration. More specifically, MRTL is formulated as an optimization problem of collective nonnegative matrix tri-factorization (NMTF). It decouples the source domain feature into multiple shared latent subspaces as bridges for
source-target knowledge transfer and subspaces of remaining feature clusters in each domain. Moreover, the target domains share a cluster association subspace to enable mutual reinforcement. We develop an alternating learning algorithm to optimize the objective. We give theoretical analysis of the proposed algorithm for convergence, and empirically show the effectiveness of the proposed method.
Overall, our main contributions of this paper include: (1) In addressing multi-relevance transfer learning problem, we propose a MRTL framework to achieve both source-target transfer and target-target transfer by exploiting shared feature subspaces; (2) We develop an alternating algorithm for optimization; (3) We analyze the theoretical convergence guarantee of the proposed algorithm, and also examine their empirical performances extensively.
\begin{figure}[htbp]
\begin{center}
\includegraphics[width=0.75\linewidth]{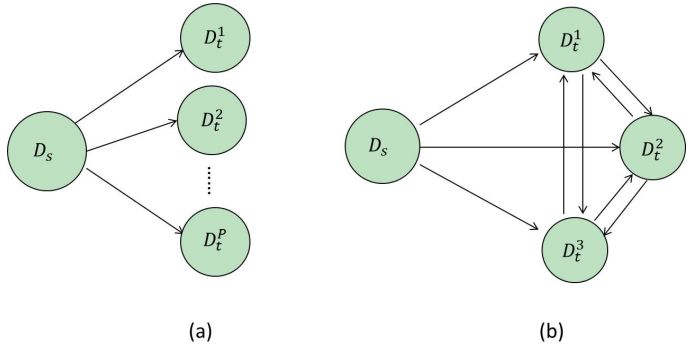}
\caption{(a) shows traditional single-relevance transfer learning. Knowledge is transferred from source to one target domain each time. (b) is Multi-Relevance Transfer Learning (MRTL). Knowledge simultaneously comes from multiple domains: from source domain and other target domains.}
\end{center}
\end{figure}

\section{Related Work and Preliminaries}
In this section, we first discuss several prior researches that mostly related to our work. Then we introduce the NMTF framework as preliminary.\\
\indent \textbf{Transfer Learning} solves the training data and test data obtained from different resources with different distributions.  Most existing efforts assume that there is shared knowledge structure acting as a bridge between the source domain and target domain to enable knowledge transfer.  Existing approaches can be grouped into two categories: instance-based transfer learning and feature-based transfer learning \cite{pan2010survey}. Instance-based transfer learning use re-weighting strategy to adapting the weights of source domain data~\cite{gretton2009covariate,DBLP:conf/acl/JiangZ07}. The second are feature representation methods which aim to learn a shared feature space to embedding the cross-domain feature information~\cite{blitzer2006domain,Blitzer11domainadaptation,DBLP:conf/ijcai/PanTKY09,gupta2010nonnegative,DBLP:conf/kdd/TanSZ015}.  Existing feature representation methods focus on transfer single-relevance latent structure from source to target. For example, Dual Transfer Learning (DTL)~\cite{DBLP:conf/sdm/LongWDCZW12} aims to simultaneously learning the marginal and conditional distributions across domains, Triplex Transfer Learning (TriTL)~\cite{zhuang2014triplex} which make source and target domain share one set of latent subspaces to transfer information. Although their formulations can also be extended to multiple target domains by sharing the same latent structures, it would fail to transfer different knowledge to each target from the source. The key difference between MRTL and these previous methods is that MRTL simultaneously learns different latent subspaces as bridges for knowledge transfer in each `source-target' pair, which is a crucial step to enhance the transfer diversity and capability.

More recently, multi-source transfer learning~\cite{ZhangGS2015,DuanTNNLS2012} have been developed to combine knowledge from multiple sources. For instance, the work in~\cite{DBLP:journals/tkdd/ChattopadhyaySFDPY12} present a two-stage domain adaptation method which combines weighted data from multiple sources. The study in~\cite{DuanTNNLS2012} propose a framework which can learn a robust decision function for label prediction for knowledge transfer.

Different from previous transfer learning approaches, multi-relevance transfer learning does not assume that auxiliary knowledge should only come from the source domain. That means, multi-relevance transfer learning can be more general and useful when the existing labeled auxiliary domain is not
adequate enough to improve the target domains.\\
\indent \textbf{Multi-task Learning} approaches simultaneously learn several
tasks together to mutual reinforcement the classification results of
each task~\cite{obozinski2006multi,evgeniou2007multi,zhang2012multi,dredze2010multi,cheng2013flexible}. It assumes that different tasks may share some common pattern, such as data clusters or subspaces. In practice, classifiers for different tasks can be designed to share some global parameters~\cite{DBLP:conf/kdd/EvgeniouP04} or even a global classifier~\cite{chapelle2010multi}. However, these methods require reasonably large amounts of labeled data in each domain to learn the relationship. In contrast, multi-relevance transfer learning works even when all the learning tasks in each target domain have no available ground truth. It only assumes that the source domain should have sufficient label information to transfer.\\
\indent \textbf{Non-negative Matrix Tri-Factorization (NMTF)} is popular and effective for data clustering and classification~\cite{DBLP:conf/kdd/DingLPP06}. It can decompose the feature-instance matrix into three submatrices. In general, given feature-instance matrix $X \in \mathbb{R}^{M \times N}$, $M$ is dimensionality and $N$ is instance number. One can obtain the factorized submatrices by solving the optimization problem given by:
\begin{align}
\min_{U,\Theta,V^{\mathrm T}} \mathcal{L}= \Arrowvert X-U \Theta V^{\mathrm T} \Arrowvert ^2,
\label{eq:NMTF}
\end{align}
where $\Arrowvert \cdot \Arrowvert$ is Frobenius norm of matrix. The matrix $U \in \mathbb{R}^{M \times k}$ indicates \emph{feature cluster subspace} and $k$ is the number of clusters in totoal. $U_{ij}$ is the probability that the $i$-th feature belongs to the $j$-th feature cluster. The matrix $V \in \mathbb{R}^{N \times c}$ is the instance \emph{cluster assignment} matrix and $c$ is cluster number. Let $V_{i,\tau} = \max_{1\leq j \leq c} V_{i, j}$, it means that the $i$-th instance belongs to the $\tau$-th cluster. For classification, each instance cluster can be regarded as a label class. The matrix $\Theta \in \mathbb{R}^{k \times c}$ is the \emph{cluster association} subspace. $\Theta_{ij}$ indicates the probability that the $i$-th feature cluster is associated with the $j$-th instance cluster.

\section{Multi-Relevance Knowledge Transfer}
We focus on transductive transfer learning where the source domain has abundant labeled examples while the target domains have unlabeled data. We consider one source domain $\mathcal{D}_s$ and multiple target domains $\mathcal{D}_t^p$, $p=1,2,...,P$.  $\mathcal{D}_s$ and $\mathcal{D}_t^p$ share the same feature dimensionality and label space. Here we consider $M$ features and $c$ classes. Let $X_s = \left[x^s_1,...,x^s_{n_s} \right] ^{\mathrm T}\in \mathbb{R}^{M\times n_{s}}$ represents the
feature-instance matrix of source domain $\mathcal{D}_{s}$, while $X_t^p = [x^p_1,...,x^p_{n_t^p}] ^{\mathrm T} \in \mathbb{R}^{M\times n_{t}^p}$ denotes the feature-instance matrix of the $p$-th target domain $\mathcal{D}_t^p$.  Labels in the source domain $\mathcal{D}_{s}$ are given as $Y_{s} \in \mathbb{R}^{n_{s} \times c}$, where $y_{ij}^s=1$ if $x_{i}$ belongs to class $j$, and $y_{ij}^s=0$ otherwise. Given $\left\{X_s,Y_s\right\}$ and $\left\{X_t^p \right\}_{p=1}^P$, we aim to find a function $f$ to predict the correct label $y_i^p$ for any unlabled instance $x_i^p, i\in [1,n_t^p]$ in the $p$-th target domain, i.e., $y_i^p=f(x_i^p)$. The goal of \emph{multi-relevance transfer learning} is to alleviate the difficulty of distribution divergences between source-target domains and target-target domains by making them drawn closer in the uncovered latent subspaces so that the classifier $f$ can be trained as accurate as possible.
\subsection{Model Formulation}
To achieve the goal of multi-relevance transfer learning, we propose an algorithm which enable knowledge can be shared between source-target domains and target-target domains for sufficient transfer learning.
\subsubsection{Source-Target Knowledge Transfer}
We first discover the latent factors shared across each `source-target' pairs. It can be formulated as a collective way of nonnegative matrix tri-factorization (NMTF).
Given source domain $\mathcal{D}_s$ and the $p$-th target domain $\mathcal{D}_t^p$, one can decompose their feature-instance matrices $X_s$ and $X_t^p$ simultaneously, allowing the decomposed matrices share the cross-domain latent subspaces. Motivated by~\cite{gupta2010nonnegative}, cross-domain feature clusters can be partitioned into a common part and a domain-specific part. The common feature cluster subspace can be shared across domains, and the domain-specific ones are the remaining feature clusters in each domain. Since we have multiple source-target pairs, we decompose each $\{ X_s, X_t^p \}$ as follows:
{\small\begin{align}
\mathcal{L}_1=& \Arrowvert X_t^p - \left[ U^p,U_t^p \right]
 \left[
  \begin{array}{c}
          \Theta_{\mu}^p\\
          \Theta_{t}^p
  \end{array}
  \right]
 \left(V_t^p \right)^{\mathrm T} \Arrowvert ^2 \notag\\
            &+ \Arrowvert X_s-  \left[ U^p,U_s^p \right]
 \left[
  \begin{array}{c}
          \Theta_{\mu}^p\\
          \Theta_{s}^p
  \end{array}
  \right]
 Y_s^{ \mathrm T} \Arrowvert ^2,
\label{eq:single}
\end{align}}where $U^p \in \mathbb{R}^{M \times k_1}$ is the subspace of \emph{common} feature clusters shared across domains, $\Theta_{\mu}^p$ is its corresponding subspace of common cluster association , $U_s^p \in \mathbb{R}^{M \times (k_2-k_1)}$ and $U_t^p \in \mathbb{R}^{M \times (k_2-k_1)}$ are the subspaces of \emph{remaining} feature clusters in $\mathcal{D}_s$ and $\mathcal{D}_t^p$ respectively, $\Theta_t^p$ and $\Theta_s^p$ are the subspaces of remaining cluster association. In this way, the source domain data matrix can be decoupled for each source-target pairs by sharing different subspaces $U^p$ and $\Theta_{\mu}^p$ across domain, thus tranferring different knowledge.
\subsubsection{Multi-Relevance Transfer Learning Algorithm}
As shown in Figure 1, in MRTL, knowledge also needs to be transferred between target domains. Since no label information in targets, we need exploit their feature relatedness.
That is, we need uncover the shared common latent factors between them as bridges for mutual reinforcement.
Therefore, to capture the latent factors between targets, we formulate the factorization of feature-instance matrices of target domains by sharing subspaces as follows:
{\small\begin{align}
\mathcal{L}_2=&\sum \limits_{p=1}^P \Arrowvert X_t^p - \left[ U^p,U_t^p \right]
 \left[
  \begin{array}{c}
          \Theta_{\mu}\\
          \Theta_{\sigma}
  \end{array}
  \right]
 \left( V_t^p \right)^{ \mathrm T} \Arrowvert ^2,
\label{eq:targets}
\end{align}}where $ \Theta_{\mu} \in \mathbb{R}^{k_1 \times c}$ and $\Theta_{\sigma} \in \mathbb{R}^{(k_2-k_1) \times c}$ are the cluster association subspaces shared by target domais.
Finally, we can combine $\mathcal{L}_1$ in (\ref{eq:single}) and $\mathcal{L}_2$ in (\ref{eq:targets}) into a joint optimization formulation as follows:
{\small\begin{align}
\mathcal{L}=&\sum \limits_{p=1}^P  \Bigg\{  \Arrowvert X_t^p - \left[ U^p,U_t^p \right]
 \left[
  \begin{array}{c}
          \Theta_{\mu}^p\\
          \Theta_{t}^p
  \end{array}
  \right]
 \left(V_t^p \right)^{\mathrm T} \Arrowvert ^2 \notag\\
            &+ \Arrowvert X_s-  \left[ U^p,U_s^p \right]
 \left[
  \begin{array}{c}
          \Theta_{\mu}^p\\
          \Theta_{s}^p
  \end{array}
  \right]
 Y_s^{ \mathrm T} \Arrowvert ^2 \notag\\
      &+\lambda \Arrowvert X_t^p - \left[ U^p,U_t^p \right]
 \left[
  \begin{array}{c}
          \Theta_{\mu}\\
          \Theta_{\sigma}
  \end{array}
  \right]
 \left( V_t^p \right)^{ \mathrm T} \Arrowvert ^2 \Bigg\},
 \label{eq:objective1}
\end{align}}where $\lambda$ is the trade-off parameter weighting the contribution of target domain relatedness.
The first two terms refer to the feature clusters and label propagation between source and target domains, the third term refers to the feature clusters and label updating among target domains. Overall, the proposed learning algorithm fits the multi-relevance relationship among all the domains. As we discussed in Section 2, the decomposed matrix $U$ contains the information on hidden feature clusters, indicating the distribution of features on each hidden cluster. Therefore, the summation of each column of $U$ has to be equal to one. The label matrix $V$ indicates the label distribution of each instance. Thus, the summation of each row of $V$ has to be equal to one. Considering these constraints, we obtain the final optimization objective function
of the proposed learning algorithm:
{\small\begin{align}
\min_{\Omega \geq 0} \quad  & \mathcal{L}    \notag\\
\mbox{s.t.}\quad  &  \sum \limits_{i=1}^{M}\left( U_t^p \right)_{(ij)}=1, \sum \limits_{i=1}^{M}\left( U_s^p \right)_{(ij)}=1, \notag\\
\quad  & \sum \limits_{i=1}^{M} U^p_{(ij)}=1, \quad\sum \limits_{j=1}^{c} \left(V_t^p \right)_{(ij)}=1,
\label{eq:objective2}
\end{align}}where $\Omega=\left\{U^p, U_t^p, U^p_s, V_t^p, \Theta^p_{\mu}, \Theta^p_{t}, \Theta^p_{s}, \Theta_{\mu}, \Theta_{\sigma} \right\}$ is the parameter set. Since the objective function in  (\ref{eq:objective2}) is non-convex, it is intractable to obtain the global optimal solution. Therefore, we develop an alternating algorithm following the theory of constrained optimization~\cite{boyd2004convex}. Specifically, we optimize one variable while fixing the rest variables. The procedure repeats until convergence. \\
\indent We first show the updating rules of matrices $U_t^p$, $U_s^p$, $U^p$, and $V_t^p$ as follows:
{\small\begin{align}
\label{eq:UUUV}
U_t^p&= U_t^p \cdot \sqrt{\frac{X_t^p V_t^p \left(\Theta_t^p \right)^{ \mathrm T}+\lambda X_t^p V_t^p \Theta_{\sigma}^{\mathrm T}}{ F_{1}V_{t}^p\left(\Theta_t^p \right)^{ \mathrm T}+\lambda F_3 V_t^p \Theta_{\sigma}^{ \mathrm T}}}, \notag\\
U_s^p&=U_s^p \cdot \sqrt{\frac{X_s Y_s \left(\Theta_s^p \right)^{\mathrm T}}{F_2 Y_s \left(\Theta_s^p \right)^{\mathrm T} }}, \\
U^p&=U^p \cdot \sqrt{\frac{  X_t^p V_t^p \left(\Theta_{\mu}^p \right)^{\mathrm T} + X_s Y_s  \left(\Theta_{\mu}^p \right)^{\mathrm T}+\lambda X_t^p V_t^p \left(\Theta_{\mu} \right)^{\mathrm T}}{F_{1} V_t^p \left(\Theta_{\mu}^p\right)^{\mathrm T}  + F_{2} Y_{s} \left( \Theta_{\mu}^p\right)^{\mathrm T} + \lambda F_{3} V_t^p \left(\Theta_{\mu} \right)^{\mathrm T}}}, \notag\\
V_t^p&= V_t^p \cdot \sqrt{\frac{ \left(X_t^p \right)^{\mathrm T} \left( U^p \Theta_{\mu}^p+U_t^p \Theta_t^p \right) +  \left(X_t^p \right)^{\mathrm T} \left( U^p \Theta_{\mu}+U_t^p \Theta_{\sigma}
\right)
}{F_1^{\mathrm T}  \left( U^p \Theta_{\mu}^p+U_t^p \Theta_t^p \right) +\lambda F_3^{\mathrm T} \left( U^p \Theta_{\mu}+U_t^p \Theta_{\sigma}
\right)  }}, \notag
\end{align}}where $F_{1}=U^p \Theta_{\mu}^p (V_t^p)^{\mathrm T}+U_t^p \Theta_{t}^p (V_t^p)^{\mathrm T}$, $F_{2}=U^p\Theta_{\mu}^p Y_{s}^{\mathrm T} + U_s^{p} \Theta_{s}^p Y_{s}^{\mathrm T}$, $F_{3}=U^p \Theta_{\mu}(V_t^p)^{\mathrm T} +U_t^p \Theta_{\sigma}(V_t^p)^{\mathrm T}$, and $\cdot$ denotes matrix Hadamard product. From (\ref{eq:objective2}), after the matrices are updated, the constrained matrices have to be normalized as:
{\small\begin{align}
\left(U_t^p \right)_{(ij)}&=\frac{\left(U_t^p \right)_{(ij)}}{\sum \limits_{i=1}^{M} \left(U_t^p \right)_{(ij)}},\quad \left(U_s^p \right)_{(ij)}=\frac{\left(U_s^p \right)_{(ij)}}{\sum \limits_{i=1}^{M} \left(U_s^p \right)_{(ij)}} , \notag\\
U^p_{(ij)}&=\frac{U^p_{(ij)}}{\sum \limits_{i=1}^{M} U^p_{(ij)}}, \quad \quad  \left(V_t^p \right)_{(ij)}=\frac{\left(V_t^p \right)_{(ij)}}{\sum \limits_{j=1}^{c} \left(V_t^p \right)_{(ij)}}.
\label{eq:normalize}
\end{align}}
\noindent Similarly, the updating rules for other submatrices are:
{\small\begin{align}
\Theta_{\mu}^p &= \Theta_{\mu}^p \cdot \sqrt{\frac{\left(U^p  \right)^{\mathrm T}X_t^p V_t^p + \left(U^p  \right)^{\mathrm T} X_s Y_s}{\left(U^p  \right)^{\mathrm T} F_1 V_t^p + \left(U^p  \right)^{\mathrm T} F_2 Y_s}}, \notag\\
\Theta_{t}^p &= \Theta_{t}^p \cdot \sqrt{\frac{\left(U_t^p  \right)^{\mathrm T} X_t^p V_t^p}{\left(U_t^p  \right)^{\mathrm T}F_1 V_t^p}}, \notag\\
\Theta_{s}^p &= \Theta_{s}^p \cdot \sqrt{\frac{\left(U_s^p  \right)^{\mathrm T} X_s Y_s}{\left(U_s^p  \right)^{\mathrm T}F_2 Y_s}}, \notag\\
\Theta_{\mu}&= \Theta_{\mu} \cdot \sqrt{\frac{\sum \limits_{p=1}^P \left(U^p  \right)^{\mathrm T} X_t^p V_t^p}{\sum \limits_{p=1}^P \left(U^p  \right)^{\mathrm T}  F_3 V_t^p}}, \notag\\
\Theta_{\sigma}&=\Theta_{\sigma} \cdot \sqrt{\frac{\sum \limits_{p=1}^P \left(U_t^p  \right)^{\mathrm T} X_t^p V_t^p}{\sum \limits_{p=1}^P \left(U_t^p  \right)^{\mathrm T} F_3 V_t^p}}.
\label{othervar}
\end{align}}
\indent These lead to the procedure of the proposed MRTL algorithm in Algorithm 1. Moreover, as shown in (\ref{eq:UUUV}), $U^p$ and $U_t^p$ are constrained by $X_s$, $Y_s$, and $X_t^p$. $\Theta_{\mu}$ and $\Theta_{\sigma}$ are constrained by all the target feature matrices $\{ X_t^p\}_{p=1}^P$. Therefore, the updating rule of $V_t^p$ is constrained by $X_s$, $Y_s$, and $\{ X_t^p\}_{p=1}^P$. That is, the information in the source domain and other target domains can be transferred to the $p$-th target.

\begin{algorithm}
\SetKw{Input}{Input:}
\SetKw{Initialization}{Initialization:}
\SetKw{Output}{Output:}
\SetKwBlock{for}{for}{end}
\SetKwBlock{while}{while}{end}

\Input  $\{X_s, Y_s\}$ from source domain,  $\left\{X_t^p\right\}_{p=1}^P$  from target domains, number of target domains $P$, trade-off parameter $\lambda$, common feature clusters $k_1$, total feature clusters $k_2$, number of iterations $maxiter$.\;
\Initialization nomalize $X_s$ and  $\left\{X_t^p\right\}_{p=1}^P$, initialize $\left\{ V_t^p \right\}_{p=1}^P$ by logistic regression trained on source domain data $\left\{X_s,Y_s\right\}$.\;
iteration index $iter \leftarrow1$.\;
\while($iter<maxiter$ \textbf{do}){
\for($p=1$ \textbf{to} $P$ \textbf{do}){
 update the submatrices $U_t^p,  U_s^p, U^p,\Theta_{\mu}^p, \Theta_{t}^p, \Theta_{s}^p$, and label matrix $V_t^p$ according to the updating rules given in (\ref{eq:UUUV}) and (\ref{othervar}).\;
normalize the submatrices $U_t^p,  U_s^p, U^p$ and label matrix $V_t^p$ according to the normalization rules given in (\ref{eq:normalize}).\;
}
update the submatrices $\Theta_{\mu}$ and $\Theta_{\sigma}$ according to the updating rules in (\ref{othervar}).\;
compute objective value $\mathcal{L}^{iter}$.\;
$iter=iter+1$.\;
}
\Output the predicted results $\{V_t^p\}_{p=1}^P$
\caption{\small{MRTL: Multi-Relevance Transfer Learning}}
\label{alg:MDTL}
\end{algorithm}

\section{Theoretical Analysis}
This section aims to analyze the convergence property of the proposed algorithm. Without loss of generality, we formulate the detailed optimization updating of parameter $U_t^p$. The Lagrange function with constraint $U_t^p \geq 0$ is given by:
{\small \begin{align}
\label{eq:lagrange}
&\mathcal{L} =\sum \limits_{p=1}^P \mathrm{tr}\bigg[\left(X_t^p \right)^{\mathrm T} X_t^p -2 \left(X_t^p \right)^{\mathrm T} F_{1} + F_{1}^{\mathrm T}F_{1} + X_s^{\mathrm T}X_s -2 X_s^{\mathrm T}F_{2}  \notag\\
&  + F_{2}^{\mathrm T}F_{2}
+\lambda \left(X_t^p \right)^{\mathrm T}X_t^p  -2\lambda \left(X_t^p \right)^{\mathrm T} F_{3} +\lambda F_{3}^{\mathrm T} F_{3}\bigg] \\
&+\sum \limits_{p=1}^P \mathrm{tr}\left[\mathbf{\Lambda} \left( \left(U_t^p \right)^{\mathrm T}\textbf{1}_{M} - \textbf{1}_{(k_{2}-k_{1})}\right)\left( \left(U_t^p \right)^{\mathrm T}\textbf{1}_{M} - \textbf{1}_{(k_{2}-k_{1})}\right)^{\mathrm T}\right], \notag
\end{align}}where $\mathbf{\Lambda} \in \mathbb{R}^{(k_{2}-k_{1}) \times (k_{2}-k_{1}) }$ is a diagonal matrix of Lagrange multiplier, $\mathbf{1}_M$ and $\mathbf{1}_{(k_2-k_1)}$ are all-ones vectors with dimension $M$ and $(k_2-k_1)$ respectively. Using the Karush-Kuhn-Tucker (KKT) complementarity condition, we have:
{\small\begin{align}
\frac{\partial \mathcal{L}}{\partial U_t^p}& \cdot U_t^p = \Big(-2X_t^p V_t^p \left(\Theta_t^p \right)^{\mathrm T} +2 F_{1} V_t^p \left(\Theta_t^p \right)^{\mathrm T}-2\lambda X_t^p V_t^p \Theta_{\sigma}^{\mathrm T} \notag\\
& +2\lambda F_3 V_t^p \Theta_{\sigma}^{\mathrm T}+2\mathbf{\Lambda} \left(U_t^p \right)^{\mathrm T}\textbf{1}_{M}\textbf{1}_{M}^{\mathrm T}-2\mathbf{\Lambda} \textbf{1}_{(k_2-k_1)}\textbf{1}_{M}^{\mathrm T}\Big)\cdot U_t^p\notag\\
&=0.
 \label{eq:KKTUtp}
\end{align}}
\begin{lemma}
\label{Lemma1}
Using the updating rule in (\ref{eq:updateUtp_KKT}) and normalization rules in (\ref{eq:normalize}), the loss function in (\ref{eq:lagrange}) will monotonously decrease until convergence.
{\small\begin{align}
U_t^p&= U_t^p \cdot \sqrt{\frac{X_t^p V_t^p \left(\Theta_t^p \right)^{ \mathrm T}+\lambda X_t^p V_t^p \Theta_{\sigma}^{\mathrm T}+\mathbf{\Lambda} \textbf{1}_{(k_2-k_1)}\textbf{1}_M^{\mathrm T}}{ F_{1}V_{t}^p\left(\Theta_t^p \right)^{ \mathrm T}+\lambda F_3 V_t^p \Theta_{\sigma}^{ \mathrm T}+\mathbf{\Lambda} \left( U_t^p\right)^{ \mathrm T} \mathbf{1}_M \mathbf{1}_M^{\mathrm T}}},
\label{eq:updateUtp_KKT}
\end{align}}
\end{lemma}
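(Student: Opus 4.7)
The plan is to follow the standard Lee--Seung auxiliary function technique adapted to our constrained tri-factorization setting. First I would fix every variable in $\Omega$ except $U_t^p$ (and implicitly the Lagrange multipliers $\mathbf{\Lambda}$), so that $\mathcal{L}$ becomes, up to constants, a sum of quadratic and linear terms in the entries of $U_t^p$. Expanding (\ref{eq:lagrange}) and collecting, the $U_t^p$--dependent part has the form
\begin{align*}
J(U_t^p) = \mathrm{tr}\!\bigl[A\,(U_t^p)^{\mathrm T} U_t^p\bigr] - 2\,\mathrm{tr}\!\bigl[B\,(U_t^p)^{\mathrm T}\bigr],
\end{align*}
where $A$ collects the positive coefficients appearing in the denominator of (\ref{eq:updateUtp_KKT}) (i.e.\ terms built from $F_1,F_3,\Theta_t^p,\Theta_\sigma,\mathbf{\Lambda}\mathbf{1}_M\mathbf{1}_M^{\mathrm T}$) and $B$ collects those appearing in the numerator. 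Both $A$ and $B$ are nonnegative, which is the structural property that makes the Lee--Seung machinery applicable.

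Next I would construct an auxiliary function $G(U_t^p,\tilde U_t^p)$ satisfying $G(U,U)=J(U)$ and $G(U,\tilde U)\ge J(U)$ for all nonnegative $U,\tilde U$. For the positive quadratic piece $\mathrm{tr}[A\,(U_t^p)^{\mathrm T} U_t^p]$ I would use the standard majorization
\begin{align*}
\sum_{i,j}\frac{(\tilde U_t^p A)_{ij}\,(U_t^p)_{ij}^2}{(\tilde U_t^p)_{ij}} \;\ge\; \mathrm{tr}\!\bigl[A\,(U_t^p)^{\mathrm T} U_t^p\bigr],
\end{align*}
which follows from convexity of $x\mapsto x^2$ and nonnegativity of $A$. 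For the negative linear piece $-2\,\mathrm{tr}[B\,(U_t^p)^{\mathrm T}]$ I would apply the concavity bound $z \ge 1+\log z$ entrywise,
\begin{align*}
-(U_t^p)_{ij} \;\le\; -(\tilde U_t^p)_{ij}\!\left(1+\log\frac{(U_t^p)_{ij}}{(\tilde U_t^p)_{ij}}\right),
\end{align*}
so that the resulting upper bound is separable and convex in each entry of $U_t^p$. Setting $\partial G/\partial (U_t^p)_{ij}=0$ and solving explicitly yields exactly the multiplicative rule (\ref{eq:updateUtp_KKT}), because the stationarity condition reproduces the KKT equation (\ref{eq:KKTUtp}) rearranged multiplicatively.

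From there the monotonicity chain is immediate: $J(U_t^{p,\text{new}}) \le G(U_t^{p,\text{new}},U_t^{p,\text{old}}) \le G(U_t^{p,\text{old}},U_t^{p,\text{old}}) = J(U_t^{p,\text{old}})$, and since $\mathcal{L}$ is bounded below by $0$, the sequence of objective values converges. Finally I would address the normalization step of (\ref{eq:normalize}) separately: because the columns of $U_t^p$ are rescaled and that rescaling can be absorbed into $\Theta_t^p$ and $\Theta_\sigma$ (which are updated afterwards and have no sum constraint), one shows that normalization followed by the subsequent $\Theta$-update does not increase $\mathcal{L}$ either, so global monotonic descent is preserved.

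The main obstacle I anticipate is bookkeeping for the Lagrange term $\mathbf{\Lambda}(U_t^p)^{\mathrm T}\mathbf{1}_M\mathbf{1}_M^{\mathrm T}$ and the contributions coming from the sum over $p$ in (\ref{eq:lagrange}): the quadratic majorizer must be applied component-wise in a way that keeps the numerator/denominator split matching (\ref{eq:updateUtp_KKT}) exactly, and care is needed to ensure the auxiliary function remains a valid upper bound when $\mathbf{\Lambda}$ is treated as constant during the $U_t^p$-step rather than jointly optimized. Verifying that the choice of $\mathbf{\Lambda}$ implicit in the projection (\ref{eq:normalize}) is consistent with this treatment is the one subtlety that deserves explicit checking, while the rest of the argument is a direct transcription of the Lee--Seung template.
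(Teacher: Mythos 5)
Your proposal follows essentially the same route as the paper: the paper invokes the Lee--Seung auxiliary-function lemma (its Lemma~\ref{Lemma2}) and constructs in Theorem~\ref{Theorem1} exactly the majorizer you describe --- the $1+\log$ bound on the negative linear terms and the $\left(U_t^p\right)_{ij}^2/(\widetilde{U}_t^p)_{ij}$ bound on the positive quadratic terms, citing Ding et al.\ for the verification --- then derives (\ref{eq:updateUtp_KKT}) from $\partial G/\partial U_t^p=0$ and closes with the same monotonicity chain. Your treatment of the normalization step and of $\mathbf{\Lambda}$ is, if anything, more explicit than the paper's, which simply argues that the normalization enforces the column-sum constraint so that the $\mathbf{\Lambda}$-terms cancel and (\ref{eq:UUUV}) approximates (\ref{eq:updateUtp_KKT}).
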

We use the auxiliary function approach~\cite{Lee00algorithmsfor} to prove Lemma~\ref{Lemma1}.
\begin{lemma}~\cite{Lee00algorithmsfor}
\label{Lemma2}
A funtion $G(Y, \widetilde{Y})$ is an auxiliary function for $\mathcal{T}(Y)$ if the conditions $G(Y, \widetilde{Y}) \geq \mathcal{T}(Y)$ and $G(Y, Y)=\mathcal{T}(Y)$
are satisfied for any $Y$, $\widetilde{Y}$. If $G$ is an auxiliary function for $\mathcal{T}$, then $\mathcal{T}$ is non-increasing under the update
{\small\begin{align}
Y^{(t+1)}=\arg\min_{Y} G\left(Y,Y^{(t)} \right).
\label{eq:nonincreasing}
\end{align}}
\end{lemma}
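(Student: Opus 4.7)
The plan is to derive the non-increasing property of $\mathcal{T}$ directly by chaining together the two defining conditions of an auxiliary function with the definition of the update rule, obtaining a short sandwich inequality between $\mathcal{T}(Y^{(t+1)})$ and $\mathcal{T}(Y^{(t)})$.

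Concretely, I would proceed in three steps. First, I apply the upper-bound condition $G(Y,\widetilde{Y}) \geq \mathcal{T}(Y)$ with the choice $Y = Y^{(t+1)}$ and $\widetilde{Y} = Y^{(t)}$ to obtain $\mathcal{T}(Y^{(t+1)}) \leq G(Y^{(t+1)}, Y^{(t)})$. Second, because $Y^{(t+1)}$ is defined in (\ref{eq:nonincreasing}) as a minimizer of $G(\cdot, Y^{(t)})$, evaluating at the candidate $Y^{(t)}$ gives $G(Y^{(t+1)}, Y^{(t)}) \leq G(Y^{(t)}, Y^{(t)})$. Third, the equality condition $G(Y,Y) = \mathcal{T}(Y)$ collapses the right-hand side to $\mathcal{T}(Y^{(t)})$. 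Concatenating these three relations yields $\mathcal{T}(Y^{(t+1)}) \leq \mathcal{T}(Y^{(t)})$, which is the claimed monotonic non-increase.

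There is essentially no hard step in this argument; it is a one-line syllogism once the two defining properties of $G$ are invoked in the right order. The only point that deserves care is ensuring that the argmin in (\ref{eq:nonincreasing}) is well defined, so that $Y^{(t+1)}$ actually exists and the inequality $G(Y^{(t+1)}, Y^{(t)}) \leq G(Y^{(t)}, Y^{(t)})$ is meaningful; for the quadratic, separable surrogates constructed in the NMTF setting of Lemma~\ref{Lemma1} this is immediate, since the resulting $G$ is strictly convex with a unique closed-form minimizer. I would also remark that the chain is strict whenever $Y^{(t)}$ is not itself a stationary point of $G(\cdot, Y^{(t)})$, which implies that fixed points of the update correspond to stationary points of $\mathcal{T}$, providing the standard justification for terminating the iteration at convergence.
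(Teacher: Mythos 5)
Your proof is correct and is essentially the same argument the paper relies on: the chain $\mathcal{T}(Y^{(t+1)}) \leq G(Y^{(t+1)},Y^{(t)}) \leq G(Y^{(t)},Y^{(t)}) = \mathcal{T}(Y^{(t)})$ is exactly the inequality chain the paper instantiates in its proof of Theorem~\ref{Theorem2} (and is the standard Lee--Seung argument the cited reference gives for this lemma). Your added remarks on well-definedness of the argmin and on fixed points being stationary points are sensible but go beyond what the paper states.
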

\begin{theorem}
\label{Theorem1}
Let $\mathcal{T}(U_t^p)$ denote the sum of all terms that contain $U_t^p$ in the loss function $\mathcal{L}$ in (\ref{eq:lagrange}) . Then the following
{\small\begin{align}
G\left(U_t^p, \widetilde{U}_t^p \right)& =-2\sum \limits_{ij}\Big( X_t^p V_t^p \left(\Theta_t^p \right)^{ \mathrm T} +\lambda X_t^p V_t^p \Theta_{\sigma}^{\mathrm T} \Lambda \mathbf{1}_{(k_2-k_1)}\mathbf{1}_{M}^{ \mathrm T} \Big)_{ij}  \notag\\
& \Big( \widetilde{U}_t^p \Big)_{ij}\Big(1+\log\frac{\left(U_t^p \right)_{ij}}{(\widetilde{U}_t^p )_{ij}}   \Big)  +\sum \limits_{ij} \Big(  F_1 V_t^p \left(\Theta_t^p \right)^{ \mathrm T} \notag\\
+&\lambda F_3 V_t^p \Theta_{\sigma}^{ \mathrm T} + \Lambda \left(U_t^p \right)^{ \mathrm T} \mathbf{1}_M \mathbf{1}_M^{ \mathrm T}\Big)_{ij}\frac{\left(U_t^p \right)_{ij}^2}{(\widetilde{U}_t^p )_{ij}}
\label{eq:constructaux}
\end{align}}is an auxiliary function for $\mathcal{T}(U_t^p)$ and is a convex function in $U_t^p$ and has a global minimum.
\end{theorem}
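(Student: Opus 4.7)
The plan is to verify the two defining properties of an auxiliary function (Lemma~\ref{Lemma2}) separately, then read off convexity and the existence of a unique global minimum directly from the structure of $G$.

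First I would isolate $\mathcal{T}(U_t^p)$, i.e.\ the sum of all terms in (\ref{eq:lagrange}) that actually depend on $U_t^p$. Expanding $F_1^{\mathrm T}F_1$, $F_3^{\mathrm T}F_3$, the cross traces $-2(X_t^p)^{\mathrm T}F_1$ and $-2\lambda(X_t^p)^{\mathrm T}F_3$, and the Lagrangian penalty produces two kinds of contributions in $U_t^p$: \emph{negative-linear} traces coming from $-2(X_t^p)^{\mathrm T}U_t^p\Theta_t^p(V_t^p)^{\mathrm T}$, its $\lambda$-analogue with $\Theta_\sigma$, and the linear part $-2\,\mathrm{tr}[\Lambda(U_t^p)^{\mathrm T}\mathbf{1}_M\mathbf{1}_{(k_2-k_1)}^{\mathrm T}]$ of the Lagrangian; and \emph{positive-quadratic} traces from the square terms in $F_1^{\mathrm T}F_1$, $F_3^{\mathrm T}F_3$, and $\mathrm{tr}[\Lambda(U_t^p)^{\mathrm T}\mathbf{1}_M\mathbf{1}_M^{\mathrm T}U_t^p]$. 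The cross (linear but positive) terms inside $F_1^{\mathrm T}F_1$ and $F_3^{\mathrm T}F_3$ naturally regroup into $F_1V_t^p(\Theta_t^p)^{\mathrm T}$ and $F_3V_t^p\Theta_\sigma^{\mathrm T}$ when the other matrices are held fixed.

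Next, I would apply two standard inequalities in the style of Lee and Seung~\cite{Lee00algorithmsfor}. For each negative-linear trace $-2\sum_{ij}A_{ij}(U_t^p)_{ij}$ with $A\ge 0$, the bound $z\ge 1+\log z$ applied at $z=(U_t^p)_{ij}/(\widetilde{U}_t^p)_{ij}$ yields exactly the first summation in (\ref{eq:constructaux}). For each positive-quadratic trace of the form $\mathrm{tr}[(U_t^p)^{\mathrm T}PU_t^pQ]$ with entrywise-nonnegative $P,Q$, I would use
\[
\mathrm{tr}\bigl[(U_t^p)^{\mathrm T}PU_t^pQ\bigr]\;\le\;\sum_{ij}\frac{(P\widetilde{U}_t^pQ)_{ij}\,(U_t^p)_{ij}^2}{(\widetilde{U}_t^p)_{ij}},
\]
and verify that with the appropriate $P$ and $Q$ the matrix $P\widetilde{U}_t^pQ$ produces precisely $F_1V_t^p(\Theta_t^p)^{\mathrm T}$, $\lambda F_3V_t^p\Theta_\sigma^{\mathrm T}$, and the Lagrangian block in the second summation of (\ref{eq:constructaux}). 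Summing these bounds gives $G\ge\mathcal{T}$; setting $U_t^p=\widetilde{U}_t^p$ turns both inequalities into equalities (since $1+\log 1=1$ and $X^2/X=X$), so $G(U,U)=\mathcal{T}(U)$.

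For convexity in $U_t^p$, each $-(\widetilde{U}_t^p)_{ij}\log(U_t^p)_{ij}$ is convex and each $(U_t^p)_{ij}^2/(\widetilde{U}_t^p)_{ij}$ is a positive-weighted square, hence convex, while the remaining contributions are linear in $U_t^p$; thus $G(\cdot,\widetilde{U}_t^p)$ is a nonnegative combination of convex functions. To obtain the global minimum I would set $\partial G/\partial(U_t^p)_{ij}=0$, which is a diagonal scalar equation giving $(U_t^p)_{ij}^2$ as the product of $(\widetilde{U}_t^p)_{ij}^2$ and the ratio of the two bracketed matrices in (\ref{eq:constructaux}); this matches the multiplicative rule (\ref{eq:updateUtp_KKT}) entrywise, tying Theorem~\ref{Theorem1} directly to Lemma~\ref{Lemma1}.

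The main obstacle I anticipate is bookkeeping rather than technique: one must match each negative-linear and positive-quadratic block produced by expanding $\mathcal{T}$ with the correct grouping inside (\ref{eq:constructaux}) so that the ``numerator'' and ``denominator'' matrices in (\ref{eq:updateUtp_KKT}) emerge cleanly from the stationarity condition on $G$. The Lagrange multiplier term is the most delicate, since it contributes simultaneously to the numerator via $\Lambda\mathbf{1}_{(k_2-k_1)}\mathbf{1}_M^{\mathrm T}$ and to the denominator via $\Lambda(U_t^p)^{\mathrm T}\mathbf{1}_M\mathbf{1}_M^{\mathrm T}$; but once the quadratic Lagrangian trace is cast in the form $\mathrm{tr}[(U_t^p)^{\mathrm T}PU_t^pQ]$ with $P=\mathbf{1}_M\mathbf{1}_M^{\mathrm T}$ and $Q=\Lambda$, the same two inequalities handle it uniformly with the data-fidelity terms.
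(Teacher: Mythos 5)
Your proposal is correct and follows essentially the same route the paper intends: the paper itself omits the proof of Theorem~\ref{Theorem1} and defers to the auxiliary-function machinery of Ding et al.~\cite{DBLP:conf/kdd/DingLPP06}, which is exactly the argument you reconstruct (the $z\ge 1+\log z$ bound for the negative-linear traces, the $\mathrm{tr}[(U)^{\mathrm T}PUQ]\le\sum_{ij}(P\widetilde{U}Q)_{ij}U_{ij}^2/\widetilde{U}_{ij}$ bound for the positive-quadratic traces, equality at $U=\widetilde{U}$, and convexity plus stationarity recovering (\ref{eq:updateUtp_KKT})). The bookkeeping caveats you flag, including the handling of the Lagrange-multiplier block, are the only details left to fill in, and they are handled uniformly by the same two inequalities as you describe.
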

\indent Theorem \ref{Theorem1} can be proved similarly in~\cite{DBLP:conf/kdd/DingLPP06}. We omit the details here due to limited space. Based on Theorem \ref{Theorem1},  $G(U_t^p,\widetilde{U}_t^p)$ can be minimized with respect to $U_t^p$ and $\widetilde{U}_t^p$ fixed. Setting $\partial G(U_t^p,\widetilde{U}_t^p)/\partial U_t^p=0$ leads to the updating rule in (\ref{eq:updateUtp_KKT}). Then Lemma \ref{Lemma1} holds.
The variable $\Lambda$ in (\ref{eq:updateUtp_KKT}) still needs to be calculated. In (\ref{eq:objective2}), $\Lambda$ is used to satisfy the condition that the summation of each column of $U_t^p$ is 1. We use the normalization method (\ref{eq:normalize}) which satisfies this condition regardless of $\Lambda$. Then, $\mathbf{\Lambda} \left(U_t^p \right)^{\mathrm T}\textbf{1}_{M}\textbf{1}_{M}^{\mathrm T}$  is equal to $\mathbf{\Lambda} \textbf{1}_{(k_2-k_1)}\textbf{1}_{M}^{\mathrm T}$. Hence, (\ref{eq:UUUV}) is an approximation to (\ref{eq:updateUtp_KKT}).
\begin{theorem}
\label{Theorem2}
Using Algorithm 1 to update $U_t^p$, $\mathcal{T}\left( U_t^p\right)$ will monotonically decreases.
\end{theorem}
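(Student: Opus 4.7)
The plan is to reduce Theorem~\ref{Theorem2} to Lemma~\ref{Lemma1}, which already guarantees that the KKT-based update~(\ref{eq:updateUtp_KKT}) combined with the normalization in~(\ref{eq:normalize}) monotonically decreases $\mathcal{T}(U_t^p)$. Since Algorithm~1 instead uses the simpler multiplicative rule~(\ref{eq:UUUV}) followed by the same normalization, the bulk of the argument is to verify that, once the column-sum constraint is enforced, these two update rules are effectively identical.

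The key computation I would carry out is to observe that after normalization via~(\ref{eq:normalize}), each column of $U_t^p$ sums to one, so $\mathbf{1}_M^{\mathrm T} U_t^p = \mathbf{1}_{(k_2-k_1)}^{\mathrm T}$. Substituting this identity into the denominator of~(\ref{eq:updateUtp_KKT}) yields
\begin{equation*}
\mathbf{\Lambda}\left(U_t^p\right)^{\mathrm T}\mathbf{1}_M \mathbf{1}_M^{\mathrm T} \;=\; \mathbf{\Lambda}\,\mathbf{1}_{(k_2-k_1)}\mathbf{1}_M^{\mathrm T},
\end{equation*}
which is precisely the Lagrange term appearing in the numerator. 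Hence the two Lagrange contributions coincide, and the multiplicative ratio defining~(\ref{eq:updateUtp_KKT}) collapses to the ratio defining~(\ref{eq:UUUV}).

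With this equivalence in hand, I would proceed by induction on the iteration index. The normalization executed at every outer iteration of Algorithm~1 enforces the feasibility condition $\mathbf{1}_M^{\mathrm T} U_t^p = \mathbf{1}_{(k_2-k_1)}^{\mathrm T}$ at the start of the next iteration, so the update~(\ref{eq:UUUV}) applied by Algorithm~1 is, step by step, identical to~(\ref{eq:updateUtp_KKT}) with the implicit Lagrange multiplier absorbed by the normalization. Invoking Lemma~\ref{Lemma1} then gives $\mathcal{T}(U_t^{p,(iter+1)}) \leq \mathcal{T}(U_t^{p,(iter)})$, which is precisely the desired monotonic decrease.

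The main obstacle is to justify that the implicit Lagrange multiplier induced by column normalization genuinely coincides with the KKT multiplier that would be obtained by solving~(\ref{eq:KKTUtp}) exactly, rather than merely producing a feasible point. I would address this by appealing to the argument in the paragraph following Theorem~\ref{Theorem1}, which establishes this identification and thereby transfers the descent property from the auxiliary-function update~(\ref{eq:updateUtp_KKT}) to the Algorithm~1 update~(\ref{eq:UUUV}). Everything beyond this identification is routine bookkeeping on the iterates.
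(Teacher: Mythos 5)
Your route is not the one the paper takes, though it leans on the same underlying machinery. The paper proves Theorem~\ref{Theorem2} directly as the standard auxiliary-function descent chain: writing $(U_t^p)^{t+1}=\arg\min_{U} G(U,(U_t^p)^{t})$ and invoking Lemma~\ref{Lemma2} together with Theorem~\ref{Theorem1} to get
$\mathcal{T}\left((U_t^p)^0\right)=G\left((U_t^p)^0,(U_t^p)^0\right)\geq G\left((U_t^p)^1,(U_t^p)^0\right)\geq G\left((U_t^p)^1,(U_t^p)^1\right)=\mathcal{T}\left((U_t^p)^1\right)\geq\cdots$; Lemma~\ref{Lemma1} is never cited in that proof. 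You instead treat Lemma~\ref{Lemma1} as the workhorse and spend your effort identifying the Algorithm~1 update (\ref{eq:UUUV}) with the KKT update (\ref{eq:updateUtp_KKT}). Since Lemma~\ref{Lemma1} precedes Theorem~\ref{Theorem2} and is itself justified by Theorem~\ref{Theorem1}, your reduction is logically admissible within the paper's ordering; unwound, it is the same auxiliary-function argument with one extra indirection. What your version makes explicit --- and what the paper's proof quietly sidesteps --- is that the theorem is stated for Algorithm~1's rule (\ref{eq:UUUV}) while the descent chain is actually established for the minimizer (\ref{eq:updateUtp_KKT}), so some identification of the two rules is genuinely needed.

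On that identification, however, you overclaim. From $\left(U_t^p\right)^{\mathrm T}\mathbf{1}_M=\mathbf{1}_{(k_2-k_1)}$ you correctly get $\mathbf{\Lambda}\left(U_t^p\right)^{\mathrm T}\mathbf{1}_M\mathbf{1}_M^{\mathrm T}=\mathbf{\Lambda}\mathbf{1}_{(k_2-k_1)}\mathbf{1}_M^{\mathrm T}$, i.e.\ the same matrix $L$ appears in numerator and denominator. But $\sqrt{(A+L)/(B+L)}$ does not ``collapse'' to $\sqrt{A/B}$: adding an equal term to numerator and denominator changes the ratio unless $L=0$ or $A=B$. The paper itself only asserts that (\ref{eq:UUUV}) is an \emph{approximation} to (\ref{eq:updateUtp_KKT}), precisely because the multiplier $\mathbf{\Lambda}$ is never computed and is simply dropped, with the constraint re-imposed by the normalization (\ref{eq:normalize}). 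So the ``main obstacle'' you flag is real, and deferring it to the paragraph after Theorem~\ref{Theorem1} does not resolve it --- that paragraph contains the same gap. A further point neither you nor the paper addresses is that the post-update normalization step can itself change the value of $\mathcal{T}(U_t^p)$, so strict monotonicity of the normalized iterates does not follow from the auxiliary-function inequality alone. In short: your argument is at the same level of rigor as the paper's once the ``collapses to'' sentence is weakened to the paper's ``approximation'' language, but as written that step asserts an algebraic identity that is false.
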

\begin{proof}
By Lemma \ref{Lemma2} and Theorem \ref{Theorem1}, we have $\mathcal{T}\left((U_t^p)^0 \right)=$\\
$  G\left((U_t^p)^0, (U_t^p)^0\right) \geq G\left((U_t^p)^1, (U_t^p)^0\right) \geq G\left((U_t^p)^1, (U_t^p)^1\right)=\mathcal{T}\left((U_t^p)^1 \right) \geq ...$ Therefore $\mathcal{T}\left( U_t^p\right)$ is monotonically decreasing.
\end{proof}
Theorem \ref{Theorem2} also hold water with respect to the other variables. Since the objective function $\mathcal{L}$ is obviously lower bounded by 0, Algorithm 1 is guaranteed to converge.

\section{Experiments}
Experiments are tested on two benchmark data
sets: 20-Newsgroups and Email Spam data sets, which are widely
adopted for transfer learning evaluation.\\
\indent \textbf{20-Newsgroups} The 20 newsgroups dataset\footnote{http://people.csail.mit.edu/jrennie/20Newsgroups/} contains 18,774 documents, and has
\begin{table}[htbp]
\small
  \centering
  \caption{Data sets generated from 20 Newsgroups}

    \begin{tabular}{c|c|c}
    \hline
    \textbf{Data Set} & Source Domain & Target Domain \\ \hline \hline
       &       & comp.graphics \\
          &       & rec.autos \\ \cline{3-3}
    comp vs. rec & comp.sys.mac.hardware & comp.os.ms-windows.misc \\
          & rec.sport.hockey & rec.motorcycles \\ \cline{3-3}
          &       & comp.sys.ibm.pc.hardware \\
          &       & rec.sport.baseball \\ \hline
          &       & rec.autos \\
          &       & sci.crypt \\ \cline{3-3}
    rec vs. sci & rec.sport.hockey & rec.motorcycles \\
          & sci.space & sci.electronics \\ \cline{3-3}
          &       & rec.sport.baseball \\
          &       & sci.med \\ \hline
                    &       & sci.crypt \\
          &       & comp.graphics\\ \cline{3-3}
    sci vs. comp & sci.space & sci.electronics \\
          & comp.sys.mac.hardware & comp.os.ms-windows.misc \\ \cline{3-3}
          &       & sci.med \\
          &       & comp.sys.ibm.pc.hardware \\  \hline
    \end{tabular}%
  \label{tab:addlabel}%
\end{table}%
a hierarchical structure with 6 main categories and 20 subcategories. Following ~\cite{DuanTNNLS2012}, we
choose the instances from three main categories \emph{comp}, \emph{rec}, \emph{sci}, with at least four subcategories to generate three settings to evaluate
our proposed algorithms. For each setting, we choose one main category as the positive class
and use another one as the negative class, and employ all the labeled instances from two subcategories to construct one domain. In the experiments, we construct one
source domain and three target domains (see Table 1 for details).\\
\indent \textbf{Email Spam} The email spam dataset\footnote{http://www.ecmlpkdd2006.org/challenge.html} contains 4000 publicly available labeled emails as well as three email sets (each contains 2500 emails) annotated by three different users. Therefore, the distributions of the publicly available email set and three user-annotated email sets differ from each other. For each set, a half of the emails are non-spam (labeled as 1) and the others are spam (labeled as -1). We
consider the publicly available email set as the source domain and
the three user-annotated sets as three target domains.

\subsection{Experimental Setup}
\begin{table*}[t]
\tiny
  \centering
  \caption{Average Classification Accuracy ($\%$) on 20 Newsgroups Dataset}

    \begin{tabular}{c|c|c|c|c|c|c|c|c|c|c|c|c}
    \hline
    Data set & Target & NMF   & LG    & SVM   & TSVM  & MTFL  & MTrick & $\text{DTL}_0$ & $\text{DTL}_1$  & $\text{TriTL}_0$ & $\text{TriTL}_1$ & MRTL \\ \hline
          & target-1 & 63.46 & 60.60  & 58.35 & 87.89 & 60.04 & 94.17 & 93.20 & 93.51  & 93.82 & 90.60  & \textbf{95.09} \\ \cline{2-13}
    comp vs. rec & target-2 & 59.71 & 65.64 & 64.78 & 92.38 & 66.58 & 93.71 & 93.97 &  96.01 & 94.94 & 92.33 & \textbf{98.26} \\ \cline{2-13}
          & target-3 & 58.98 & 92.44 & 92.99 & 95.63 & 93.31 & 97.26 & 97.21 &  \textbf{97.87} & \textbf{97.82} & 97.67 & \textbf{97.97} \\ \cline{1-13}
    \multicolumn{2}{c|}{Average}       & 60.72 & 72.89 & 72.04 & 91.97 & 73.31 & 95.05 & 94.79 & 95.80 & 95.52 & 93.53 & \textbf{97.11} \\ \cline{1-13}
          & target-1 & 52.58 & 55.31 & 53.85 & 87.60  & 59.41 & 90.08 & 88.41 & 88.41 & 90.03 & 84.16 & \textbf{90.74} \\ \cline{2-13}
    rec vs. sci & target-2 & 50.28 & 57.16 & 56.45 & 83.76 & 60.58 & 91.35 & 88.77 & 90.95 & 86.80  & 89.58 & \textbf{92.46} \\ \cline{2-13}
          & target-3 & 63.50  & 85.84 & 86.86 & 92.32 & 87.45 & 96.71 & \textbf{97.37} & \textbf{97.47} & 95.75 & 96.01 & 96.97 \\ \cline{1-13}
   \multicolumn{2}{c|}{Average}       & 55.45 & 66.10  & 65.72 & 87.89 & 69.15 & 92.72 & 91.52 & 92.28 & 90.86 & 89.92 & \textbf{93.39} \\ \cline{1-13}
          & target-1 & 67.79 & 70.34 & 67.79 & 82.34 & 67.13 & 87.29 & 88.21 & 87.95  &  89.84 & 88.82 & \textbf{90.30} \\ \cline{2-13}
    sci vs. comp & target-2 & 57.47 & 60.35 & 59.84 & 66.46 & 59.22 & 75.04 & 76.22 & 76.07 & 76.12 & 74.27 & \textbf{80.07} \\ \cline{2-13}
          & target-3 & 53.36 & 81.94 & 81.59 & 91.05 & 79.86 & 98.02 & \textbf{98.27} & 97.41  & 97.81 & 97.20  & \textbf{98.58} \\ \cline{1-13}
    \multicolumn{2}{c|}{Average}        & 59.54 & 70.88 & 69.74 & 79.95 & 68.74 & 86.78 & 87.57 & 87.14  & 87.92 & 86.76 & \textbf{89.65} \\
    \hline
    \end{tabular}%
  \label{tab:addlabel}%
\end{table*}%

\begin{table*}[t]
\tiny
  \centering
  \caption{Average Classification Accuracy  ($\%$)  on Email Spam Dataset}
    \begin{tabular}{c|c|c|c|c|c|c|c|c|c|c|c|c}
    \hline
    Source & Target & NMF   & LG    & SVM   & TSVM  & MTFL  & Mtrick & $\text{DTL}_0$  & $\text{DTL}_1$  & $\text{TriTL}_0$ & $\text{TriTL}_1$ & MRTL \\
    \hline
          & User 1 & 73.36 & 65.56 & 56.36 & 72.64 & 58.92 & 83.16 & 82.92 & 82.04 & 81.80  & 79.16 & \textbf{83.48} \\  \cline{2-13}
    Public Set & User 2 & 77.80  & 67.28 & 61.32 & 77.92 & 62.84 & 84.36 & 84.04 & 84.52 &85.16 & 78.76 & \textbf{86.68} \\  \cline{2-13}
          & User 3 & 79.16 & 81.84 & 69.32 & 90.64 & 70.08 & 90.36 & 90.40 & 91.08 & 92.04 & \textbf{92.68} & \textbf{92.48} \\ \cline{1-13}
   \multicolumn{2}{c|}{Average}     & 76.77 & 71.56 & 62.33 & 80.39 & 63.95 & 85.96 & 85.79 & 86.07 & 86.33 & 83.53 & \textbf{87.55} \\
    \hline
    \end{tabular}%
  \label{tab:addlabel}%
\end{table*}%
We compare the proposed MRTL with several state-of-the-art methods: (1) Unsupervised method Nonnegative Matrix Factorization (NMF) ~\cite{Lee00algorithmsfor}, which is directly applied to the target domain data. (2) Supervised methods, including Logistic Regression\footnote{http://research.microsoft.com/en-us/um/people/minka/\\papers/logreg/} (LG) and Support Vector Machine (SVM), which are trained on the source domain data and tested on the target domain data using the implementation
in LibSVM\footnote{http://www.csie.ntu.edu.tw/~cjlin/libsvm/} with linear kernel SVM. (3) Semi-supervised learning method Transductive Support
Vector Machine\footnote{http://www.cs.cornell.edu/People/tj/svm\_light/}(TSVM)~\cite{joachims1999transductive}, which works in a transductive setting using both source and target domain data for training. (4) Multi-task learning method Multi-Task Feature Learning (MTFL)~\cite{evgeniou2007multi}. It is trained on the source domain and tested on all the target domains simultaneously. (5) The state-of-the-art transfer learning methods, including Matrix Tri-Factorization based Classification (MTrick)~\cite{DBLP:conf/sdm/ZhuangLXHXS10}, Dual Transfer Learning (DTL)~\cite{DBLP:conf/sdm/LongWDCZW12} and Triplex Transfer Learning (TriTL)~\cite{zhuang2014triplex}. Both DTL and TriTL can be extended to solve multiple target domains by making the source and all the target domains share the same feature cluster subspace. In the experiments, the single target ones are referred to as $\text{DTL}_0$ and $\text{TriTL}_0$, and the extension ones are $\text{DTL}_1$ and $\text{TriTL}_1$, respectively. $\text{TriTL}_1$ and MRTL are trained using the source domain and all target domains.\\
\indent Since model selection is still an open question in transductive transfer learning, one practical solution is to choose one existing labeled data set to make training and validation.  Therefore, we select \emph{comp vs.rec} to conduct corss-validation. The parameters of the proposed method and baselines are tuned on the data set \emph{comp vs.rec}. Then the tuned parameters are applied to all other data sets. The parameters of MRTL include the trade-off parameter $\lambda$, the number of common feature clusters $k_1$ and total feature clusters $k_2$. In the comparison experiments (see Table 2 and 3), we set $k_2=50$, $k_1=10$, $\lambda=10$, $maxiter=100$.
\subsection{Experimental Results and Discussion}
Table 2 and 3 show the accuracy of all these algorithms on each target domain and their average. We can observe from the results that the proposed MRTL consistently outperforms the considered rivals on each data set. We can also find that the non-transfer methods NMF, LG, and SVM cannot perform well on most data sets. MTFL performs poorly because without transfer the multitask classifiers trained on the source domain cannot discriminate well on target domains.
TSVM outperforms them on many data sets which verifies the unlabeled data can help improve performance, but performs worse when the distribution diversity across domain is large. MTrick, $\text{DTL}_0$ and $\text{TriTL}_0$ performs better than the non-transfer methods, but have not reached the best performances becauese of the restriction that they fail to exploit the relatedness between target domains. $\text{TriTL}_1$ cannot simultaneously perform well on all target domains since it assumes that all the `source-target' pairs share the same latent subspaces which would lead to insufficient transfer or overfitting on some target domains.\\
\indent To verify that exploiting the relatedness between target domains indeed brings about effectiveness, MRTL is compared with $\text{TriTL}_1$, $\text{TriTL}_0$, $\text{DTL}_0$, $\text{DTL}_1$, and MTrick. We plot the average classification performance of MRTL with respect to $\lambda$ on \emph{comp vs. rec} data set in Figure \ref{fig:changelambda}. The average baseline results are shown as
dashed lines. It can be seen that the performance of MRTL improves at first with the increasing of $\lambda$.  When the parameter varies in a wide range $\lambda \in [1, 100]$, MRTL performs quite stably and consistently outperforms the baselines. It indicates that by exploiting the relatedness between target domains, MRTL achieves optimal transferability. Also, we test the model paramete $k_1$ varying from 5 to 50 to analyze how it affects the average classification performance. The results are shown in Figure \ref{fig:changek1}, from which we can see that the average accuracy increases at first and then decreases, which indicates that only a part of feature clusters are shared as common, thus the partition of feature cluster subspaces is justified.

{\pgfplotsset{footnotesize,samples=5}
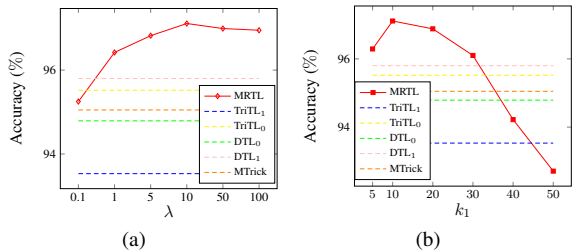
\begin{figure}[htbp]
\begin{center}
\subfigure[]{
\tikzset{every mark/.append style={scale=1.1}}
\begin{tikzpicture}[scale=0.58]
\begin{axis}[
height = 5.7cm,
width = 6.5cm,
grid=major,
xlabel style = {font=\large,yshift = 7pt},
xlabel = {$\lambda$},
ylabel style = {font=\large,yshift = -7pt},
ylabel = {Accuracy ($\%$)},
legend style={at={(0.9905,0.5745)},font=\scriptsize},
legend cell align = left,
ymajorgrids=false,
xmajorgrids=false,
ytick = {0.90,0.92,0.94,0.96,0.98},
xtick ={1,2,3,4,5,6},
xticklabels={0.1,1,5,10,50,100},
yticklabels={90,92,94,96,98},
cycle multi list={
{red,mark=diamond,mark size=1.5pt},
{blue,mark=, densely dashed,mark size=3pt},
{yellow,mark=, densely dashed,mark size=3pt},
{green,mark=, densely dashed,mark size=3pt},
{pink,mark=, densely dashed,mark size=3pt},
{orange,mark=, densely dashed,mark size=3pt},
}
]

\addplot coordinates {
(1,0.9525)
(2,0.9642)
(3,0.9682)
(4,0.9711)
(5,0.9699)
(6,0.9695)
};
\addplot coordinates {
(1,0.935314)
(2,0.935314)
(3,0.935314)
(4,0.935314)
(5,0.935314)
(6,0.935314)
};
\addplot coordinates {
(1,0.9552)
(2,0.9552)
(3,0.9552)
(4,0.9552)
(5,0.9552)
(6,0.9552)
};
\addplot coordinates {
(1,0.9479)
(2,0.9479)
(3,0.9479)
(4,0.9479)
(5,0.9479)
(6,0.9479)
};
\addplot coordinates {
(1,0.958)
(2,0.958)
(3,0.958)
(4,0.958)
(5,0.958)
(6,0.958)
};
\addplot coordinates {
(1,0.9505)
(2,0.9505)
(3,0.9505)
(4,0.9505)
(5,0.9505)
(6,0.9505)
};
\legend{MRTL,$\text{TriTL}_1$,$\text{TriTL}_0$,$\text{DTL}_0$,$\text{DTL}_1$,MTrick}
\end{axis}
\end{tikzpicture}

\label{fig:changelambda}
}
\subfigure[]{
\tikzset{every mark/.append style={scale=1.1}}
\begin{tikzpicture}[scale=0.58]
\begin{axis}[
height = 5.7cm,
width = 6.5cm,
grid=major,
xlabel style = {font=\large,yshift = 7pt},
xlabel = {$k_1$},
ylabel style = {font=\large,yshift = -7pt},
ylabel = {Accuracy ($\%$)},
legend style={at={(0.3505,0.5805)},font=\scriptsize},
legend cell align = left,
ymajorgrids=false,
xmajorgrids=false,
ytick = {0.92,0.94,0.96,0.98},
xtick ={5,10,20,30,40,50},
yticklabels={92,94,96,98},
cycle multi list={
{red,mark=square*,mark size=1.2pt},
{blue,mark=, densely dashed,mark size=3pt},
{yellow,mark=, densely dashed,mark size=3pt},
{green,mark=, densely dashed,mark size=3pt},
{pink,mark=, densely dashed,mark size=3pt},
{orange,mark=, densely dashed,mark size=3pt},
}
]

\addplot coordinates {
(5,0.9629)
(10,0.9711)
(20,0.9688)
(30,0.9610)
(40,0.9422)
(50,0.9271)
};
\addplot coordinates {
(5,0.935314)
(10,0.935314)
(20,0.935314)
(30,0.935314)
(40,0.935314)
(50,0.935314)
};
\addplot coordinates {
(5,0.9552)
(10,0.9552)
(20,0.9552)
(30,0.9552)
(40,0.9552)
(50,0.9552)
};
\addplot coordinates {
(5,0.9479)
(10,0.9479)
(20,0.9479)
(30,0.9479)
(40,0.9479)
(50,0.9479)
};
\addplot coordinates {
(5,0.958)
(10,0.958)
(20,0.958)
(30,0.958)
(40,0.958)
(50,0.958)
};
\addplot coordinates {
(5,0.9505)
(10,0.9505)
(20,0.9505)
(30,0.9505)
(40,0.9505)
(50,0.9505)
};
\legend{MRTL,$\text{TriTL}_1$,$\text{TriTL}_0$,$\text{DTL}_0$,$\text{DTL}_1$,MTrick}
\end{axis}
\end{tikzpicture}
\label{fig:changek1}
}

\caption{Performance of MRTL with respect to $\lambda$ and $k_1$ on \emph{comp vs. rec}  data set.}
\end{center}
\end{figure}}

The proposed method achieves better performance when $k_1$ is between 5 and 30. For different `source-target' pairs, we can set different $k_1$ values. In this paper, we simply set them equal.

\subsection{Algorithm Convergence}

In Section 4, we have theoretically proven the convergence property of the proposed MRTL algorithm. Here we empirically check the convergence by testing it on \emph{comp vs. rec} data set. In Figure \ref{fig:convergenceObj}, we show the logarithmic objective value with respect to the number of iterations. We see that after around five iterations, the objective value experiences almost no change. Similarly, we show the average classification accuracy of all target domains with respect to the number of iterations in Figure \ref{fig:convergenceAcc}. The results show that the average accuracy of MRTL increases with more iterations and converges after 80 iterations.
{
\pgfplotsset{footnotesize,samples=5}
\begin{figure}[htbp]

\setlength{\abovecaptionskip}{-1pt}
\setlength{\belowcaptionskip}{-1pt}
\begin{center}
\subfigure[{Objective values}]{
\tikzset{every mark/.append style={scale=1.1}}
\begin{tikzpicture}[scale=0.58]
\begin{axis}[
height = 5.7cm,
width = 6.54cm,
grid=major,
xlabel style = {font=\large,yshift = 7pt},
xlabel = {Iteration},
ylabel style = {font=\large,yshift = -7pt},
ylabel = {Objective value},
legend style={at={(1.0010,0.2030)}},
legend cell align = left,
ymajorgrids=false,
xmajorgrids=false,
ytick = {-3,-2,-1,0,1,2,3,4},
xtick ={1,20,40,60,80,100},
xticklabels={1,20,40,60,80,100},
cycle multi list={
{blue,mark=*,mark size=0.8pt},
}
]

\addplot coordinates {

(1,3.7716)
(2,-0.7888)
(3,-2.1696)
(4,-2.2318)
(5,-2.2354)
(6,-2.2359)
(7,-2.236)
(8,-2.236)
(9,-2.236)
(10,-2.236)
(11,-2.236)
(12,-2.236)
(13,-2.236)
(14,-2.236)
(15,-2.236)
(16,-2.236)
(17,-2.236)
(18,-2.236)
(19,-2.236)
(20,-2.236)
(21,-2.236)
(22,-2.236)
(23,-2.236)
(24,-2.236)
(25,-2.2361)
(26,-2.2361)
(27,-2.2361)
(28,-2.2361)
(29,-2.2361)
(30,-2.2361)
(31,-2.2361)
(32,-2.2361)
(33,-2.2361)
(34,-2.2361)
(35,-2.2361)
(36,-2.2361)
(37,-2.2362)
(38,-2.2362)
(39,-2.2362)
(40,-2.2362)
(41,-2.2362)
(42,-2.2362)
(43,-2.2362)
(44,-2.2362)
(45,-2.2362)
(46,-2.2363)
(47,-2.2363)
(48,-2.2363)
(49,-2.2363)
(50,-2.2363)
(51,-2.2363)
(52,-2.2363)
(53,-2.2363)
(54,-2.2363)
(55,-2.2363)
(56,-2.2363)
(57,-2.2363)
(58,-2.2363)
(59,-2.2363)
(60,-2.2363)
(61,-2.2363)
(62,-2.2363)
(63,-2.2363)
(64,-2.2363)
(65,-2.2363)
(66,-2.2363)
(67,-2.2363)
(68,-2.2363)
(69,-2.2363)
(70,-2.2363)
(71,-2.2363)
(72,-2.2363)
(73,-2.2363)
(74,-2.2363)
(75,-2.2363)
(76,-2.2363)
(77,-2.2363)
(78,-2.2363)
(79,-2.2363)
(80,-2.2363)
(81,-2.2363)
(82,-2.2363)
(83,-2.2363)
(84,-2.2363)
(85,-2.2363)
(86,-2.2363)
(87,-2.2363)
(88,-2.2363)
(89,-2.2363)
(90,-2.2363)
(91,-2.2363)
(92,-2.2363)
(93,-2.2363)
(94,-2.2363)
(95,-2.2363)
(96,-2.2363)
(97,-2.2363)
(98,-2.2363)
(99,-2.2363)
(100,-2.2363)
};

\end{axis}
\end{tikzpicture}
\label{fig:convergenceObj}}
~~~~\subfigure[{Accuracy}]
{
\tikzset{every mark/.append style={scale=1.1}}
\begin{tikzpicture}[scale=0.58]
\begin{axis}[
height = 5.7cm,
width =6.54cm,
grid=major,
xlabel style = {font=\large,yshift = 7pt},
xlabel = {Iteration},
ylabel style = {font=\large,yshift = -7pt},
ylabel = {Accuracy ($\%$)},
legend style={at={(1.0005,0.2030)}},
legend cell align = left,
ymajorgrids=false,
xmajorgrids=false,
ytick = {0.70,0.75,0.80,0.85,0.90,0.95,1.00},
xtick ={1,20,40,60,80,100},
xticklabels={1,20,40,60,80,100},
yticklabels = {70,75,80,85,90,95,100},
cycle multi list={
	{red, mark=square,mark size=0.8pt},
	{red,mark=square},
}
]

\addplot coordinates {
(1,0.728946)
(2,0.728946)
(3,0.728946)
(4,0.728946)
(5,0.728946)
(6,0.728946)
(7,0.728775)
(8,0.728775)
(9,0.728946)
(10,0.729116)
(11,0.729116)
(12,0.729285)
(13,0.729115)
(14,0.729115)
(15,0.729115)
(16,0.729115)
(17,0.729284)
(18,0.729284)
(19,0.729284)
(20,0.728943)
(21,0.729453)
(22,0.729453)
(23,0.729282)
(24,0.729623)
(25,0.731491)
(26,0.732511)
(27,0.734209)
(28,0.735398)
(29,0.737271)
(30,0.739988)
(31,0.744068)
(32,0.74781)
(33,0.752402)
(34,0.758183)
(35,0.762783)
(36,0.769425)
(37,0.776233)
(38,0.784404)
(39,0.793428)
(40,0.800582)
(41,0.810968)
(42,0.823911)
(43,0.833112)
(44,0.841799)
(45,0.85117)
(46,0.859855)
(47,0.867185)
(48,0.872466)
(49,0.88047)
(50,0.88541)
(51,0.892224)
(52,0.898699)
(53,0.903468)
(54,0.908919)
(55,0.913689)
(56,0.918289)
(57,0.922205)
(58,0.927144)
(59,0.92987)
(60,0.933787)
(61,0.936683)
(62,0.938898)
(63,0.941283)
(64,0.943837)
(65,0.946393)
(66,0.949628)
(67,0.95048)
(68,0.952525)
(69,0.95559)
(70,0.956101)
(71,0.957464)
(72,0.958486)
(73,0.959338)
(74,0.959849)
(75,0.960872)
(76,0.961383)
(77,0.961894)
(78,0.962405)
(79,0.963426)
(80,0.964619)
(81,0.964958)
(82,0.965639)
(83,0.966151)
(84,0.966832)
(85,0.967173)
(86,0.968024)
(87,0.968194)
(88,0.968876)
(89,0.969557)
(90,0.969897)
(91,0.969898)
(92,0.969728)
(93,0.970749)
(94,0.970579)
(95,0.970579)
(96,0.971427)
(97,0.971597)
(98,0.971427)
(99,0.971256)
(100,0.971086)

};

\end{axis}
\end{tikzpicture}
\label{fig:convergenceAcc}

}

\caption{Performance of MRTL with respect to iterations on \emph{comp vs. rec}  data set.}
\end{center}
\end{figure}
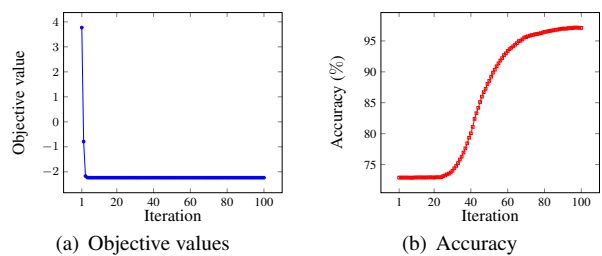
}

\section{Conclusion}
In this paper, we study multi-relevance transfer learning, where knowledge not only needs to be transferred from the source domain but also from all the target domains. We propose a MRTL framework to solve this problem. The framework achieves both source-target transfer and target-target transfer by sharing multiple decomposed latent subspaces. We develop an alternating scheme for optimization. Experiments on two datasets show the effectiveness of the proposed approach. The convergence property has also been theoretically and experimentally proven. In future, extending MRTL to tackle online tasks is an interesting problem.

\bibliography{aaai18}
\end{document}